\documentclass{article}

\PassOptionsToPackage{numbers}{natbib}
\usepackage[final]{neurips_2024}

\usepackage{graphicx} 
\usepackage{amsmath}
\usepackage{amsfonts}
\usepackage{url}
\usepackage{microtype}
\usepackage{graphicx}
\usepackage{booktabs} 
\usepackage{appendix}
\usepackage{multirow}
\usepackage{amsthm}
\usepackage{algorithm}
\usepackage{algpseudocode}
\usepackage{amssymb}
\usepackage[capitalize,noabbrev]{cleveref}
\newtheorem{theorem}{Theorem}[section]

\newtheorem{proposition}[theorem]{Proposition}
\newtheorem{definition}[theorem]{Definition}

\title{
A General Recipe for Contractive Graph Neural Networks - Technical Report
}

\author{%
  Maya Bechler-Speicher \\
  Blavatnik School of Computer Science\\
 Tel-Aviv University \\
  \And
  Moshe Eliasof \\
 Department of Applied Mathematics and Theoretical Physics  \\
University of Cambridge \\
}
\begin{document}

\maketitle
\begin{abstract}
Graph Neural Networks (GNNs) have gained significant popularity for learning representations of graph-structured data due to their expressive power and scalability. However, despite their success in domains such as social network analysis, recommendation systems, and bioinformatics, GNNs often face challenges related to stability, generalization, and robustness to noise and adversarial attacks. Regularization techniques have shown promise in addressing these challenges by controlling model complexity and improving robustness. Building on recent advancements in contractive GNN architectures, this paper presents a novel method for inducing contractive behavior in any GNN through SVD regularization. By deriving a sufficient condition for contractiveness in the update step and applying constraints on network parameters, we demonstrate the impact of SVD regularization on the Lipschitz constant of GNNs. Our findings highlight the role of SVD regularization in enhancing the stability and generalization of GNNs, contributing to the development of more robust graph-based learning algorithms dynamics.
\end{abstract}

\section{Introduction}
\label{sec:intro}

In recent years, Graph Neural Networks (GNNs) have emerged as powerful tools for learning representations of graph-structured data, enabling breakthroughs across various fields, including social network analysis, recommendation systems, and bioinformatics \cite{wu2020comprehensive}. The unique ability of GNNs to capture the complex, non-Euclidean structure of graph data has contributed to their success. However, as the size and complexity of real-world datasets increase, GNNs often encounter challenges such as instability, overfitting, and vulnerability to adversarial attacks \cite{zhang2019spectral, scaman2018optimal}. These issues pose significant barriers to the scalability and generalization of GNNs, particularly in applications where data is large, noisy, or susceptible to manipulation.

Regularization techniques play a crucial role in addressing these limitations by imposing constraints that reduce the risk of overfitting and improve model interpretability. One such technique, singular value decomposition (SVD) regularization, has garnered attention for its ability to enforce structured constraints on weight matrices, thereby enhancing model robustness \cite{mahoney2011randomized}. SVD regularization not only mitigates overfitting but also promotes stability by controlling the Lipschitz constant of the network, which governs the sensitivity of the model to input perturbations.

Recently, \citet{eliasof2023contractive} introduced a contractive GNN architecture designed to improve robustness against adversarial attacks. By leveraging principles from contractive systems theory, they demonstrated how the inherent stability of contractive systems could be applied to GNNs to counteract adversarial influences. Building upon this work, we propose a general method to transform any GNN architecture into a contractive GNN. Specifically, we derive a sufficient condition for the update step in GNNs to be contractive and introduce constraints over network parameters that guarantee contractiveness.

Our approach leverages SVD regularization as a key mechanism for inducing contractive behavior, allowing us to stabilize GNNs without sacrificing their expressive power. By analyzing the effect of SVD regularization on the Lipschitz constant of GNNs, we provide new insights into how this technique can enhance both stability and generalization. This work contributes to the broader understanding of regularization techniques in GNNs and paves the way for future research aimed at improving the robustness and scalability of graph-based learning models.

\section{Imposing Contractivity on Popular GNNs}
\label{sec:method}
We now mathematically derive contractivity conditions for two popular GNNs, namely, GCN \cite{kipf2016semi} and GraphConv \cite{morris2019weisfeiler}. We choose to focus on these two architectures because they are well-known and many subsequent GNNs rely on them, such as GCNII \cite{chen20simple} and EGNN \cite{zhou2021dirichlet}. Furthermore, GraphConv is maximally expressive (1-WL) in terms of linear complexity MPNNs \cite{morris2019weisfeiler}.

\paragraph{Notations.}
We define a graph $G=(V,E)$ by its set of nodes $V$ and edges $E$. The graph can also be described by its adjacency matrix $A$ such that $A_{ij} = 1$ if the edge $(i,j)$ exists, i.e., $(i,j) \in E$. Otherwise, $A_{ij} = 0$. We denote the node features at the $l$-th layer by $X^{(l)} \in \mathbb{R}^{|V| \times C}$ where $C$ is the hidden feature dimension. Because our focus is on the contractivity of GNN models, we also denote a perturbation of the node features by $X^* = X + \epsilon$, such that $\epsilon \in \mathbb{R}^{|V| \times C}$ is a noise matrix. We denote an activation function (e.g., ReLU) by $\sigma(\cdot)$.

\begin{definition}[Contractive GNN Layer]\label{def:contractivity}
    A GNN layer $f()$, combined with a 1-Lipschitz activation function $\sigma$, is said to be contracting with respect to a graph $A$ if, for any pair of node feature matrices $X$ and $X^*$, the following inequality holds:
\begin{equation*}
\|\sigma(f(X; A)) - \sigma(f(X^*; A))\| \leq \|X - X^*\|
\end{equation*}

This definition implies that the distance between the transformed feature representations should not increase through the layer. In other words, the GNN layer should contract the perturbations or differences in the input feature matrices.
\end{definition}

\paragraph{GCN} The GCN model \cite{kipf2016semi} is described by the following update equation:
\begin{equation*}
    \label{eq:gcn_model}
    X^{(l+1)} = \sigma(AX^{(l)}W^{(l)})
\end{equation*}
where $W^{(l)} \in \mathbb{R}^{C \times C}$ are trainable weights.

We now derive the mathematical conditions with respect to Equation \cref{eq:gcn_model}, such that they impose contractivity on the GCN model. For simplicity, we will omit the layer notations $l$.

\begin{proposition}[Contractivity conditions for GCN]\label{proof:gcn}
Contractivity can be imposed on GCN by requiring:
        $$ \|W\| \leq \frac{1}{\|A\|}. $$
\end{proposition}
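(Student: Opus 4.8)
The plan is to unwind \cref{def:contractivity} directly for the GCN update map $f(X;A) = AXW$ and reduce the contractivity requirement to a single submultiplicative-norm estimate. First I would write the left-hand side of the contractivity inequality as $\|\sigma(AXW) - \sigma(AX^*W)\|$ and use the hypothesis that $\sigma$ is $1$-Lipschitz to bound it by $\|AXW - AX^*W\|$. By linearity this equals $\|A(X-X^*)W\|$, so the whole claim now hinges on controlling the norm of this triple matrix product applied to the perturbation $X - X^* = \epsilon$.

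Next I would apply submultiplicativity of the matrix norm twice to obtain $\|A(X-X^*)W\| \le \|A\|\,\|X-X^*\|\,\|W\|$. Substituting the assumption $\|W\| \le 1/\|A\|$ collapses the factor $\|A\|\,\|W\|$ to at most $1$, which gives $\|\sigma(f(X;A)) - \sigma(f(X^*;A))\| \le \|X-X^*\|$, i.e.\ exactly the contractivity condition. This also implicitly uses $\|A\| > 0$; the degenerate case $A = 0$ is trivial, since then the layer output is constant in $X$ and the inequality holds automatically.

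There is essentially no deep step here, so the ``main obstacle'' is really just bookkeeping about norm conventions, and I would make these explicit at the top of the proof: the estimate $\|A M W\| \le \|A\|\,\|M\|\,\|W\|$ requires the chosen matrix norm to be submultiplicative and consistent across the three factors (the spectral norm, or more generally any operator norm induced by a vector norm, works; the Frobenius norm also works via $\|AB\|_F \le \|A\|_2\|B\|_F$), and the reduction in the first step requires that the $1$-Lipschitz property of the scalar activation $\sigma$ transfers to the matrix-valued map in that same norm (immediate for entrywise activations under the Frobenius norm). Once the norm is fixed, the three-line chain of inequalities above completes the argument, and the same template will carry over with only cosmetic changes to the GraphConv case.
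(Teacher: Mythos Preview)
Your argument is correct, and it is actually more elementary than the paper's own proof. The paper establishes the same bound by vectorizing: it rewrites $\|A\epsilon W\| \le \|\epsilon\|$ as $\|(W\otimes A)\,\mathrm{vec}(\epsilon)\| \le \|\mathrm{vec}(\epsilon)\|$, then uses submultiplicativity of the operator norm together with the identity $\|W\otimes A\| = \|W\|\,\|A\|$ for the $\ell_2$-induced norm to reduce to $\|W\|\,\|A\| \le 1$. You bypass the Kronecker-product detour entirely by applying submultiplicativity directly to the triple product $\|A\epsilon W\| \le \|A\|\,\|\epsilon\|\,\|W\|$, arriving at the same condition in one line. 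Your version is shorter and, since you spell out the norm compatibility requirements (spectral or Frobenius, and the transfer of the scalar $1$-Lipschitz property to the entrywise map), arguably cleaner; the paper's vectorization route mainly pays off as a template for the GraphConv case, where the sum $I\epsilon W_1 + \tilde{A}\epsilon W_2$ is handled uniformly via $(W_1^\top\otimes I) + (W_2^\top\otimes \tilde{A})$, though even there your direct triangle-inequality-plus-submultiplicativity approach would work just as well.
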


\begin{proof}
        
Let us assume that $\sigma$ is 1-Lipschitz (e.g., ReLU, LeakyReLU, Tanh), i.e. 
$$
\|\sigma(AXW) - \sigma(AX^*W)\| \leq \|AXW - AX^*W\|.
$$ 
Then, we can consider the linear part of the update equation of GCN, i.e., $X = AXW$. To obtain contractivity with respect to perturbed node features $X^*$ in this model, we need to satisfy the following:
\begin{equation}
    \label{eq:contractivity_demand_gcn}
    \|AXW - AX^*W\| \leq \|X - X^*\|.
\end{equation}

Let us denote the noise matrix by $ \epsilon = X - X^*$, then the following holds: $\|AXW - AX^*W\| = \|A(X - X^*)W\| = \|A \epsilon W\|$. Recall that according to \cref{eq:contractivity_demand_gcn} we need to require that:
\begin{align}
    \label{eq:derive_gcn_constraints}
    \|A \epsilon W \| \leq \|\epsilon\|.
\end{align}

We can rephrase \cref{eq:derive_gcn_constraints} as follows:
\begin{align*}
    & \|A \epsilon W \| \leq \|\epsilon\| \\
    & \iff \| \text{vec}(A\epsilon W) \| \leq \| \text{vec}(\epsilon) \| \\ 
    & \iff \| (W \otimes A) \text{vec}(\epsilon) \| \leq \| \text{vec}(\epsilon) \|,
\end{align*}
where $\otimes$ denotes the Kronecker product.

By using the sub-multiplicative property of norms, we can bound the above inequality as follows:
$$ \|(W \otimes A) \text{vec}(\epsilon) \| \leq \|W \otimes A\| \| \text{vec}(\epsilon) \|,$$
and therefore to obtain contractivity we demand:
$$
\|W \otimes A\| \| \text{vec}(\epsilon) \| \leq \| \text{vec}(\epsilon) \|.
$$
Note that this demand is equivalent to the following inequality $\|W \otimes A\| \leq 1$. Using the property of Kronecker products, we get that:
$$ \|W \otimes A\| = \|W\| \|A\|.$$
Therefore, \cref{eq:contractivity_demand_gcn} can be satisfied by demanding: 
$$ \|W\| \leq \frac{1}{\|A\|}.$$
\end{proof}

\paragraph{GraphConv}  The GraphConv model \cite{morris2019weisfeiler} admits to the following update equation:

\begin{equation*}
    \label{eq:graphconv}
    X^{(l+1)} = \sigma(X^{(l)}W_1^{(l)} + \tilde{A}X^{(l)}W_2^{(l)})
\end{equation*}
where $W_1, W_2 \in \mathbb{R}^{C \times C}$ are learnable weight matrices, and $\tilde{A}$ is the adjacency matrix of the graph without self-loops. We now derive the mathematical conditions with respect to Equation \cref{eq:graphconv}, such that they impose contractivity on the GraphConv model. Similar to the case of GCN, and for simplicity, we will omit the layer notations $l$. 

\begin{proposition}[Contractivity conditions for GraphConv]
    \label{prop:conractGraphConv}
    A GraphConv layer is contractive if:
    $$
      \|W_1^\top \| + \| W_2^\top \| \| \tilde{A} \| \leq 1
    $$
\end{proposition}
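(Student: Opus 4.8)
The plan is to follow the same template as the GCN argument in \cref{proof:gcn}, the only genuinely new feature being that the linear part of GraphConv is a \emph{sum} of two bilinear terms rather than a single one, so the Kronecker identity must be applied termwise and the resulting pieces combined via the triangle inequality.

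First I would invoke the $1$-Lipschitz assumption on $\sigma$ (as in \cref{def:contractivity}) to reduce the claim to a linear estimate: setting $\epsilon = X - X^*$, it suffices to show $\|\epsilon W_1 + \tilde{A}\epsilon W_2\| \leq \|\epsilon\|$. Next, I would vectorize and apply the identity $\mathrm{vec}(MXN) = (N^\top \otimes M)\,\mathrm{vec}(X)$ to each summand separately, obtaining $\mathrm{vec}(\epsilon W_1 + \tilde{A}\epsilon W_2) = \bigl(W_1^\top \otimes I + W_2^\top \otimes \tilde{A}\bigr)\,\mathrm{vec}(\epsilon)$, where $I$ is the $|V|\times|V|$ identity. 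Hence contractivity is implied by the operator-norm bound $\|W_1^\top \otimes I + W_2^\top \otimes \tilde{A}\| \leq 1$.

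I would then estimate this norm: by the triangle inequality it is at most $\|W_1^\top \otimes I\| + \|W_2^\top \otimes \tilde{A}\|$, and by multiplicativity of the spectral norm under Kronecker products, together with $\|I\| = 1$, this equals $\|W_1^\top\| + \|W_2^\top\|\,\|\tilde{A}\|$. Requiring this quantity to be $\leq 1$ yields exactly the stated condition, and combining the inequalities back through the vectorization and the Lipschitz step completes the argument.

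The main obstacle—mild but worth flagging—is that, unlike in the GCN case, the term inside the norm does not collapse to a single Kronecker product, so the triangle-inequality split produces a genuinely \emph{sufficient} (not tight) condition; one also has to be a little careful that the norm in play is the spectral norm (so that it is submultiplicative under Kronecker products, satisfies $\|M^\top\| = \|M\|$, and has $\|I\| = 1$), which is consistent with the conventions already used in \cref{proof:gcn}.
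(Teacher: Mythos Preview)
Your proposal is correct and follows essentially the same route as the paper: reduce to the linear part via the $1$-Lipschitz assumption, vectorize each summand using the Kronecker identity, then combine the triangle inequality with the multiplicativity of the spectral norm under Kronecker products and $\|I\|=1$ to arrive at the stated condition. The only cosmetic difference is that you first factor the sum as a single operator $(W_1^\top\otimes I + W_2^\top\otimes\tilde A)$ acting on $\mathrm{vec}(\epsilon)$ and then bound its norm, whereas the paper applies the triangle inequality directly to the two vector summands before invoking submultiplicativity; the two presentations are interchangeable.
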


\begin{proof}
    Let us assume that $\sigma$ is 1-Lipschitz (e.g., ReLU, LeakyReLU, Tanh), i.e. 
\begin{align} \nonumber
\|  \sigma(XW_1 + \tilde{A}XW_2) - \sigma( X^*W_1 + \tilde{A}X^*W_2 ) \| \\ \nonumber \leq \| XW_1 + \tilde{A}XW_2 - ( X^*W_1 + \tilde{A}X^*W_2 ) \|.
\end{align} Then we can consider the linear part of  GraphConv layer that reads $X = XW_1 + \tilde{A}XW_2$. To obtain contractivity we want to have the following:
\begin{equation}
\label{eq:condition_graphconv}
    \| XW_1 + \tilde{A}XW_2 - ( X^*W_1 + \tilde{A}X^*W_2 ) \| \leq \| X - X^* \|
\end{equation}
By expanding the left hand side of \cref{eq:condition_graphconv}, we obtain:
\begin{align}
    &\notag \| XW_1 + \tilde{A}XW_2 - (X^*W_1 + \tilde{A}X^*W_2)  \| \\ &\notag= \| (X-X^*)W_1 + \tilde{A}(X-X^*)W_2  \| \\
    &\notag = \| I(X-X^*)W_1 + \tilde{A}(X-X^*)W_2 \| \\ \label{eq:expansion_condition_graphconv}
    &= \| I \epsilon W_1 + \tilde{A} \epsilon W_2 \|,
\end{align}
where $\epsilon = (X-X^*)$.
 Recall the following identity of the Kronecker product: $${\rm{vec}}(AXB^T) = (B \otimes A) {\rm{vec}(X)},$$ which holds for any triplet of matrices $A,X,B$.

By considering the norm of the vectorization \cref{eq:expansion_condition_graphconv}, we obtain the following expression:

\begin{align}
    &\notag \| {\rm{vec}}(I \epsilon W_1) + {\rm{vec}}(\tilde{A} \epsilon W_1)  \| \\
    &\notag = \| (W_1^\top \otimes I){\rm{vec}}(\epsilon) + (W_2^\top \otimes \tilde{A}){\rm{vec}}(\epsilon)  \| \\
    &\notag  \leq \|  (W_1^\top \otimes I){\rm{vec}}(\epsilon) \| + \|  (W_2^\top \otimes \tilde{A}){\rm{vec}}(\epsilon)  \| \\ \label{eq:norm}
    &\leq  \|  (W_1^\top \otimes I) \| \| {\rm{vec}}(\epsilon) \| + \| (W_2^\top \otimes \tilde{A}) \| \| {\rm{vec}}(\epsilon) \| 
\end{align}

Recall that in order to obtain contractivity we demand that:
\begin{equation}
\label{eq:contractivity_constraint}
 \| {\rm{vec}}(I \epsilon W_1) + {\rm{vec}}(\tilde{A} \epsilon W_1)  \| \leq \|{\rm{vec}}(\epsilon)\|.  
 \end{equation}

Therefore, combining  \cref{eq:norm} and \cref{eq:contractivity_constraint}  we  demand that:
\begin{align*}
     & \|  (W_1^\top \otimes I) \| \| {\rm{vec}}(\epsilon) \| + \| (W_2^\top \otimes \tilde{A}) \| \| {\rm{vec}}(\epsilon) \| \leq \|{\rm{vec}}(\epsilon)\|
\end{align*}
Therefore, we need to require that $\| ((W_1^\top \otimes I))\| + \|(W_2^\top \otimes \tilde{A})) \| \leq 1.$

It can be proven \footnote{\url{https://math.stackexchange.com/questions/2342156/matrix-norm-of-kronecker-product}} that for the $\ell_2$ induced matrix norm, it holds that $\|A \otimes B \| = \|A\| \|B\|$. Thus we need to require the following:

\begin{align*}
    \| ((W_1^\top \otimes I)) + (W_2^\top \otimes \tilde{A})) \| &\leq \|W_1^\top \| \|I\| + \| W_2^\top \| \| \tilde{A} \| \\ &= \|W_1^\top \| + \| W_2^\top \| \| \tilde{A} \|  \leq 1
\end{align*}

\end{proof}

\section{Implementing Contractivity}
\label{sec:implementing}
In Propositions \ref{proof:gcn} and \ref{prop:conractGraphConv}, we presented conditions to obtain contractivity for GCN and GraphConv, respectively. We now show that these conditions can be satisfied using SVD regularization on the learned weight matrices of the models. This implementation highlights the interesting connection between the well-known SVD regularization \cite{van1996matrix}, and contractivity in GNNs. 
We first present a proposition regarding the ability of SVD decomposition to implement contractivity for GCN and GraphConv layers, with its proof provided in the Appendix.

\begin{proposition}[SVD Implements Contractivity]
\label{prop:svd_contractivity}
Given a weight matrix $W$ and a threshold $\tau$, the SVD decomposition of $W$ and the subsequent modification of its singular values according to the threshold $\tau$ ensures that the resulting weight matrix $\tilde{W}$ satisfies the contractivity condition $\|W\| \leq \tau$.
\end{proposition}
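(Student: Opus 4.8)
The idea is to make the phrase ``modification of its singular values according to the threshold $\tau$'' precise and then invoke two standard facts: (i) for the $\ell_2$-induced operator norm, $\|W\|$ equals the largest singular value of $W$; and (ii) this norm is invariant under left/right multiplication by orthogonal matrices. First I would write the SVD $W = U\Sigma V^\top$, where $U,V \in \mathbb{R}^{C\times C}$ are orthogonal and $\Sigma = \mathrm{diag}(\sigma_1,\dots,\sigma_C)$ with $\sigma_1 \geq \dots \geq \sigma_C \geq 0$, so that $\|W\| = \sigma_1$. The thresholding step is then: replace every singular value by $\tilde\sigma_i := \min(\sigma_i,\tau)$ — i.e.\ clip all singular values exceeding $\tau$ down to $\tau$ and leave the rest unchanged — and set $\tilde W := U\tilde\Sigma V^\top$ with $\tilde\Sigma = \mathrm{diag}(\tilde\sigma_1,\dots,\tilde\sigma_C)$.

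Next I would argue that $U\tilde\Sigma V^\top$ is again a valid SVD of $\tilde W$: the entries $\tilde\sigma_i$ are still nonnegative, and although clipping may change their relative order, it satisfies $\tilde\sigma_i \le \sigma_i$ for every $i$, so no new value exceeds $\sigma_1$. Hence the largest singular value of $\tilde W$ is $\max_i \tilde\sigma_i = \min(\sigma_1,\tau) \le \tau$, and by fact (i) above $\|\tilde W\| = \max_i \tilde\sigma_i \le \tau$, which is exactly the contractivity condition $\|\tilde W\| \le \tau$. The same conclusion follows directly from fact (ii): $\|\tilde W\| = \|U\tilde\Sigma V^\top\| = \|\tilde\Sigma\| = \max_i \tilde\sigma_i$.

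Finally I would connect this back to the earlier results. Taking $\tau = 1/\|A\|$ and applying the construction to the GCN weight $W$ gives $\|\tilde W\| \le 1/\|A\|$, which is precisely the hypothesis of \cref{proof:gcn}; for GraphConv one picks thresholds $\tau_1,\tau_2 > 0$ with $\tau_1 + \tau_2\|\tilde A\| \le 1$, clips $W_1$ and $W_2$ to $\tau_1$ and $\tau_2$ respectively, and uses $\|W_i^\top\| = \|W_i\|$ to recover the condition of \cref{prop:conractGraphConv}. There is essentially no deep obstacle here; the only points I would be careful to state explicitly are the orthogonal invariance of the spectral norm and the monotonicity $\tilde\sigma_i \le \sigma_i$ that guarantees the clipping never enlarges a singular value. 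It is also worth remarking (though not needed for the statement) that this clipping is the Frobenius-norm-closest modification of $W$ achieving $\|\tilde W\| \le \tau$, which is why it is the natural choice among all admissible modifications.
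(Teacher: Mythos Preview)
Your proposal is correct and follows essentially the same approach as the paper: write the SVD, clip each singular value via $\tilde\sigma_i = \min(\sigma_i,\tau)$, reconstruct $\tilde W = U\tilde\Sigma V^\top$, and conclude $\|\tilde W\| = \max_i \tilde\sigma_i \le \tau$ using that the spectral norm equals the largest singular value. Your additional remarks on orthogonal invariance, the connection back to \cref{proof:gcn} and \cref{prop:conractGraphConv}, and the Frobenius-optimality of the clipping are correct embellishments but not part of the paper's proof.
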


\begin{proof}
    Let $W \in \mathbb{R}^{m \times n}$ be a weight matrix. The SVD decomposition of $W$ is given by:
\begin{equation*}
W = U S V^\top,
\end{equation*}
where $U \in \mathbb{R}^{m \times m}$ and $V \in \mathbb{R}^{n \times n}$ are orthogonal matrices, and $S \in \mathbb{R}^{m \times n}$ is a diagonal matrix containing the singular values $\{s_i\}_{i=1}^{\min(m,n)}$ of $W$.

To enforce the contractivity condition $\|W\| \leq \tau$, we modify the singular values of $S$ as follows:
\begin{equation*}
\tilde{S}_{ii} = \begin{cases}
s_{ii}, \quad \text{if } s_{ii} \leq \tau, \\
\tau, \quad \text{otherwise}.
\end{cases}
\end{equation*}

Let $\tilde{W}$ be the modified weight matrix:
\begin{equation*}
\tilde{W} = U \tilde{S} V^\top.
\end{equation*}

We need to show that $\|\tilde{W}\| \leq \tau$. Recall that for the induced $\ell_2$ matrix norm (also known as the spectral norm), we have:
\begin{equation*}
\|W\|_2 = \max_i s_i,
\end{equation*}
where $s_i$ are the singular values of $W$.

By construction, the singular values of $\tilde{W}$, denoted as $\tilde{s}_i$, are:
\begin{equation*}
\tilde{s}_i = \min(s_i, \tau).
\end{equation*}

Thus, the largest singular value of $\tilde{W}$ is:
\begin{equation*}
\max_i \tilde{s}_i \leq \tau.
\end{equation*}

Therefore, we have:
\begin{equation*}
\|\tilde{W}\|_2 = \max_i \tilde{s}_i \leq \tau,
\end{equation*}
which satisfies the contractivity condition.

\end{proof}
\paragraph{Contractive GCN} Recall that according to Proposition \ref{proof:gcn}, we need to satisfy the following inequality for a GCN layer to be contractive:
 $$ \|W\| \leq \frac{1}{\|A\|}$$ 
In this case, the adjacency matrix $A$ is fixed (given as an input), and therefore $\|A\|$ is known, while $W$ is learned. We can enforce contractivity in GCN by modifying the learned weights $W$ as follows:

\begin{align*}
    \label{eq:svd_gcn}
    U, S, V &= {\rm{SVD}}(W), \\
    \tilde{S}_{ii} &= \begin{cases}
        S_{ii}, \quad \text{if } S_{ii} \leq \frac{1}{\|A\|} \\
        \frac{1}{\|A\|}, \quad \text{otherwise}
    \end{cases} \\
    \tilde{W} &= U \tilde{S} V^\top
\end{align*}
where $\text{SVD}$ denotes the singular value decomposition, and $S$ is a diagonal matrix of the singular values of $W$.
Therefore, a contractive GCN layer reads:
\begin{equation*}
    \label{eq:gcn_contractive_model}
    X^{(l+1)} = \sigma(AX^{(l)}\tilde{W}^{(l)}).
\end{equation*}

\paragraph{Contractive GraphConv.} As presented in Proposition \ref{prop:conractGraphConv}, GraphConv is contractive if the following condition is met:
 
 $$
  \|W_1^\top \| + \| W_2^\top \| \| \tilde{A} \| \leq 1
$$

We now follow a similar update to the learned weights of GraphConv. The main difference here compared to the case of GCN is that we have two terms that condition the contractivity of the model. Since we do not have a specific preference on which term is to be bounded, we introduce a coefficient $\alpha \in [0,1]$ such that the (weighted, but equivalent in terms of contractivity) condition is:
\begin{equation}
    \label{eq:weightedConditiongconv}
      \alpha \|W_1^\top \| + (1-\alpha)\| W_2^\top \| \| \tilde{A} \| \leq 1
\end{equation}
 
To satisfy \cref{eq:weightedConditiongconv}, we demand the following:
\begin{align}
          \|W_1^\top \|  &\leq \alpha, \label{eq:demandW1}  \\
          \| W_2^\top \|&\leq \frac{1-\alpha}{\|\tilde{A}\|} \label{eq:demandW2}
\end{align}
Using the SVD decomposition, we can satisfy \cref{eq:demandW1} as follows:
\begin{align*}
    U, S, V &= {\rm{SVD}}(W_1) \\
    \tilde{S}_{ii} &= \begin{cases}
        S_{ii}, \quad \text{if } S_{ii} < \alpha \\
        \alpha, \quad \text{otherwise}
    \end{cases} \\
    \tilde{W_1} &= U \tilde{S} V^\top
\end{align*}
Similarly, \cref{eq:demandW2} can be satisfied as follows:
\begin{align*}
    U, S, V &= {\rm{SVD}}(W_2) \\
    \tilde{S}_{ii} &= \begin{cases}
        S_{ii}, \quad \text{if } S_{ii} < \frac{1-\alpha}{\|\tilde{A}\|} \\
        \frac{1-\alpha}{\|\tilde{A}\|} \quad \text{otherwise}
    \end{cases} \\
    \tilde{W_2} &= U \tilde{S} V^\top
\end{align*}
leading to the overall contractive update rule of GraphConv:

\begin{equation*}
    \label{eq:graphconv_contractive}
    X^{(l+1)} = \sigma(X^{(l)}\tilde{W}_1^{(l)} + \tilde{A}X^{(l)}\tilde{W}_2^{(l)})
\end{equation*}

\section{Summary}
This paper introduces a method to make any GNN architecture contractive by applying SVD regularization, enhancing stability and preventing overfitting. By analyzing its impact on the Lipschitz constant, the approach improves both the robustness and scalability of GNNs, building on recent advancements in contractive systems.

\bibliographystyle{unsrtnat}
\bibliography{biblio.bib}
\newpage
\end{document}